\newtheorem{theorem}{Theorem}
\newtheorem{definition}[theorem]{Definition}
\begin{document}

\title{On Simulated Annealing Dedicated to Maximin Latin Hypercube Designs}
\author{Pierre Berg\'e, Kaourintin Le Guiban,\\
Arpad Rimmel, Joanna Tomasik, \\
LRI, CentraleSupélec, Université Paris-Saclay\\
Bat 650, Rue Noetzlin, 91405 Orsay, France}

\maketitle

\begin{abstract}
The goal of our research was to enhance local search heuristics used to construct Latin Hypercube Designs. First, we introduce the \textit{1D-move} perturbation to improve the space exploration performed by these algorithms. Second, we propose a new evaluation function $\psi_{p,\sigma}$ specifically targeting the Maximin criterion. 

Exhaustive series of experiments with Simulated Annealing, which we used as a typically well-behaving local search heuristics, confirm that our goal was reached as the result we obtained surpasses the best scores reported in the literature. Furthermore, the $\psi_{p,\sigma}$ function seems very promising for a wide spectrum of optimization problems through the Maximin criterion.
\end{abstract}

%\keywords{metaheuristic algorithm, Latin hypercube design, maximin design, mutation, evaluation function}

\section{Introduction}

The study of complex systems usually requires a considerable computation time. To speed up computations, the system may be replaced by a faster approximating model. To create this model, a set of outcomes for different parameter values is needed. The set of parameter values has an impact on the accuracy of the approximating model. Different sampling methods for this set of parameters has been proposed in~\cite{mckay79}. If we note $k$ the number of inputs of the system and $n$ the number of possible values taken by an input variable $x_{k}$, the choice of $n$ sample vectors can be represented by points on a hypercube of size $n$ and dimension $k$. Among the designs proposed, we focus on the Maximin Latin Hypercube Design (LHD). 

The LHD implements the Latin constraint: each coordinate in $\left[\left|1;n\right|\right]$ must appear only once in every dimension. In other words, the coordinates of any pairs of nodes differ in all dimensions. Moreover, the Maximin constraint means that we search the configuration with the maximal $d_{\min}$, where $d_{\min}$ is the minimal distance between two points of the design.
An instance is defined by the values of the dimension $k$ and the size $n$. Consequently, an instance will be noted $k/n$, for example $10/50$.

There are exactly $\binom{n}{2} = \frac{n(n-1)}{2}$ distances between points. They may be ordered $d_{1} \leq d_{2} \leq ... \leq d_{i} \leq d_{i+1} ... \leq d_{\binom{n}{2}} $, by definition, $d_{\min} = d_{1}$. In the remainder, we often refer to square values: $D_{i} = d_{i}^{2}$.

As the Maximin LHD problem is believed to be NP-hard, heuristics are widely used to solve it. The use of deterministic methods are, for the moment, rather limited: branch-and-bound was only used with $k \leq 3$ by \cite{vandam07,vandam09} (a highscore is a minimal maximal distance between any pair of design points for a given instance) . The survey \cite{rimmel14} of metaheuristics sums up their performance on Maximin LHD and reports that SA not only outperformed other evolutionary algorithms but also improved many of the previous highscores for $k \leq 10$ and $n \leq 25$. For this reason we choose to work on the improvement of SA applied to Maximin LHDs. 
%Manque la ref

After introducing usual methods used to solve Maximin LHD with SA, we propose a new mutation and a new evaluation function to improve on the best current scores. Eventually, we show how it permits to exceed a great part of the scores presented by the literature. These results should not exclusively be considered in the particular SA context as they may offer the opportunity to boost the performance of local search algorithms for different designs with the Maximin constraint.

{\bf N.B.} {\em This reports completes our paper~\cite{Ber+16} providing the precisions and details which had to be omitted in its camera-ready version due to the page number limit. It should  thus be cited together with~\cite{Ber+16}.} 

\section{Typical approach to solve Maximin LHD with Simulated Annealing}
	
SA is a metaheuristic most commonly used for discrete search spaces inspired by a metallurgical process. It consists in visiting the search space with a perturbation on the configurations and deciding whether the mutated configuration should be selected. That is why it alternates phases of heating and slow cooling to influence this choice: as in physics, by controlling the cooling process, we give an opportunity to the configuration to find the lowest energy. 
	
	It is proven that SA converges to the global minimum with the Metropolis probability \cite{aarts85}. Several ingredients are compulsory for SA \cite{kirkpatrick83}: a perturbation (a mutation), an evaluation function $H_{\mbox{\scriptsize {pot}}}$ (also called potential energy) and a temperature decrease $T(k)$, where $k$ is the iteration number. The acceptance probability depends on $T(k)$ and the gap of potential energy between the new and the old configuration. When the current configuration at the iteration number $k$ is $\omega$, a configuration $\omega'$ will be accepted with the Metropolis probability:
\[
p_k = \min \left(1,e^{\frac{H_{\mbox{\scriptsize {pot}}}(\omega') - H_{\mbox{\scriptsize {pot}}}(\omega)}{KT(k)}}\right)
\]
Constant $K$ is typically fixed to 1. A configuration $\omega$ is defined by $n$ points with $k$ coordinates respecting the Latin constraints.
	
Survey \cite{rimmel14} examined several perturbations among which $m_2$ was the most efficient. It deals with a pair of points: a randomly chosen one and a critical one. A critical point is a point involved in $d_{\min}$. Mutation $m_2$ transposes the coordinates of these points in one dimension. The authors of \cite{rimmel14} proposed mutation $m_3$ which is a variant of $m_2$ as the transposition takes place in the dimension which ensures a better $d_{\min}$ for a subsequent configuration $\omega'$. Mutation $m_3$ outperformed $m_2$ for $9/10$, $4/25$ and $8/20$.
	
	Article \cite{rimmel14} compared two evaluation functions: $-d_{\mbox{\scriptsize {min}}}$ and $\phi_{p}$ introduced in~\cite{morris95}:
	
	\begin{equation}
	\phi_{p} = \left(\sum_{i=1}^{\binom{n}{2}} d_{i}^{-p}\right)^{\frac{1}{p}}.
	\end{equation}
	
	Function $\phi_{p}$ is more efficient than $-d_{\mbox{\scriptsize {min}}}$ certainly because it takes into account changes on every distance whereas the function $-d_{\mbox{\scriptsize {min}}}$ only considers the shortest distance of the configuration. As the paper \cite{rimmel14} obtained the best score of the literature, we used the same values to set the values of parameter~$p$.
	
\section{New perturbation targeting LHD}

\subsection{Principle of the perturbation}
	
We use $m_2$ as a basis to construct a better performing perturbation. To clarify its principle, we define the notion of the neighborhood: 
	
\begin{definition}[Neighbor of a point]
	For a given instance $k/n$, a point $p_{1}$ of a configuration $\omega$ is a neighbor of the point $p_{2}$ if and only if there is a dimension $j$ for which coordinates of these two points are the closest possible. In other terms, $\exists j \in \left[\left|1;k\right|\right] \mbox{such that} \left|p_{1}(j) - p_{2}(j)\right| = 1$.
	
\end{definition}
	
The new mutation \textit{1D--move} consists in taking a critical point as before and taking one of its neighbor. Then we exchange the coordinates in one of the dimensions concerned by the neighborhood.
	
	We choose a $3/5$ instance to illustrate \textit{1D--move}. Table~\ref{mutation} gives the coordinates of the points of a configuration $\omega$. First, we choose a critical point: $d_{\mbox{\scriptsize {min}}}$ is determined by points $p_{1}$ and $p_{2}$, so we take $p_{1}$. Points $p_{2}$ (on axis $x$, $y$ and $z$), $p_{3}$ (on axis $y$) and $p_{4}$ (on axis $z$) are neighbors of $p_{1}$. For the sake of the example, we shall choose $p_{4}$ as a neighbor. Then, we exchange the coordinates of $p_{1}$ and $p_{4}$ on axis $z$ because $p_{1}$ and $p_{4}$ are neighbors through dimension $z$. The new configuration is also given in Table~\ref{mutation}. 
		
		\begin{table*}[t]
	\centering
	$
		\begin{array}{|c|c|c|c|c|c|}
		\hline
		\mbox{Points} & p_1 & p_2 & p_3 & p_4 & p_5 \\
		\hline
		x & 0 & 1 & 2 & 3 & 4\\
		\hline
		y & 1 & 2 & 0 & 4 & 3\\
		\hline
		z & \textbf{2} & 1 & 4 & \textbf{3} & 0\\
		\hline
		\end{array}
	$
	\hspace*{5mm}
	$
		\begin{array}{|c|c|c|c|c|c|}
		\hline
		\mbox{Points} & p_1 & p_2 & p_3 & p_4 & p_5 \\
		\hline
		x & 0 & 1 & 2 & 3 & 4\\
		\hline
		y & 1 & 2 & 0 & 4 & 3\\
		\hline
		z & \textbf{3} & 1 & 4 & \textbf{2} & 0\\
		\hline
		\end{array}
	$
	\caption{Illustration of \textit{1D--move} with an initial (left) and a following (right) configuration}
	\label{mutation}
	\end{table*}

	\subsection{Performance Evaluation} \label{PerformanceEvaluation}
	
	\textit{1D--move} outperforms not only $m_{2}$ but $m_{3}$ as well. We reproduced the experiments made in~\cite{rimmel14} keeping the same value of parameter $p$ ($p=10$) for $4/25$, $9/10$ and $8/20$ to show its performance (Table~\ref{perf1dmove}). SA performs a linear thermal descent until temperature $T=0$ is reached. The initial temperature is set thanks to a series of preliminary runs.  We computed 100 effective runs and we present here the average within the 95\% confidence interval. 
	
	\begin{table}[h]
	\centering
	$\begin{array}{|c|c|c|c|}
	\hline
	\mbox{Instance} & m_2 & m_3 & \mbox{\textit{1D--move}} \\
	\hline
	4/25 & 177.59 \pm 0.29 & 177.67 \pm 0.29 & 180.51 \pm 0.27\\
	\hline
	9/10 & 156.24 \pm 0.10 & 156.06 \pm 0.08 & 156.54 \pm 0.06 \\
	\hline
	8/20 & 431.98 \pm 0.61 & 433.72 \pm 0.84 & 436.20 \pm 0.56\\
	\hline
	\end{array}$
	\caption{Performance of SA with different mutations}
	\label{perf1dmove}
	\end{table}
	
	To explain the efficiency of \textit{1D--move}, we can refer to SA on the Traveling Salesman Problem. Article \cite{tian99} shows that the perturbations which move the smallest number of edges are the best. 1D-move modifies the same number of points as $m_{2}$ and $m_{3}$. Consequently, $2(n-2)$ distances are modified by all these mutations. However, the changes on distances are smaller with \textit{1D--move} given that modifications on coordinates are $\pm 1$ thanks to the neighborhood property. We prove it with Eq.~(\ref{mutationChanges}). Our hypothesis is that this specific property explains why \textit{1D--move} is more efficient.
	
	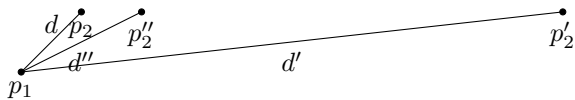
\begin{figure}[b]
\centering
\begin{tikzpicture}[scale=0.8]

\draw [black, fill=black] (0,0) circle (0.05);
\draw (0,0) node[below]{$p_1$};
\draw [black, fill=black] (1,1) circle (0.05);
\draw (1,1) node[below]{$p_2$};
\draw [black, fill=black] (2,1) circle (0.05);
\draw (2,1) node[below]{$p''_2$};
\draw [black, fill=black] (9,1) circle (0.05);
\draw (9,1) node[below]{$p'_2$};
\draw (0,0)--(1,1);
\draw (0.5,0.5) node[above]{$d$};
\draw (0,0)--(2,1);
\draw (1,0.5) node[below]{$d''$};
\draw (0,0)--(9,1);
\draw (4.5,0.5) node[below]{$d'$};

%\draw [black, fill=black] (1,2) circle (0.05);
%\draw [black, fill=black] (1,3) circle (0.05);
%\draw [black, fill=black] (2,2) circle (0.05);
%\draw [black, fill=black] (9,3) circle (0.05);
\end{tikzpicture}
\caption{Effect of $m_2$ and \textit{1D--move}}
\label{schemaDistance}
\end{figure}
	
	Let us consider a $n/k$ configuration $\omega$ and the two mutations $m_{2}$ and \textit{1D--move}. We want to prove that the changes due to these two mutations are not necessarily represented by the same order of magnitude. Let us note:
	\[
	 m_{2}: \omega \longrightarrow \omega'  ~\mbox{ and }~ \mbox{\textit{1D--move}}: \omega \longrightarrow \omega{''}.
	\]
	We assume the two mutations translate the point $p_{2}$ on a given dimension $j$. We also take a point $p_{1}$ of the configuration which remains invariant with these mutations ($p_{1}'$ and $p_{1}''$ equal to $p_{1}$). The objective is to find a configuration for which $\Delta d = \left|d_{p_{1}',p_{2}'} - d_{p_{1},p_{2}}\right|$ is equivalent to $n$ in the $m_{2}$ case. This difference is:
	\[
	\Delta d = \left|\sqrt{\sum_{l=1}^{k} (p_{2}'(l) - p_{1}(l))^{2}} - \sqrt{\sum_{l=1}^{k} (p_{2}(l)-p_{1}(l))^{2}}\right|.
	\]
	As most of the dimensions are not concerned by this move, we just note:
	\[
	\sum_{l=1,l\neq j}^{k} \left( p_{2}'(l) - p_{1}(l)\right)^{2} = \sum_{l=1,l\neq j}^{k} \left( p_{2}(l) - p_{1}(l)\right)^{2} = a^{2}.
	\]
	Variable $a$ depends on $n$ and $k$. If $p_{1}$ and $p_{2}$ are neighbors regarding all dimensions (except $j$), $a^{2} = k-1$. We take a configuration for which $a^{2} = k-1$, $p_{1}(j) = 0$, $p_{2}(j) = n-1$ and $p_{2}'(j) = 1$. This configuration is illustrated in Figure~\ref{schemaDistance}.
	
	By separating $j$ from other dimensions, we eventually find: 
	\begin{equation}
	\begin{array}{ll}
	\Delta d &= \left|\sqrt{a^{2} + (n-1)^{2}} - \sqrt{a^{2} + 1}\right| \\
	&= \left|\sqrt{k-1 + (n-1)^{2}} - \sqrt{k}\right|.
	\end{array}
	\label{mutationChanges}
	\end{equation}
	Some configurations respect the property: $k \ll n^{2}$, for which $\Delta d = \mathrm{O}(n)$. This means that the difference between two distances may take values with the order of magnitude $n$. It is not possible with \textit{1D--move}. Using the triangle inequality and the neighborhood property: $\Delta d \leq 1$.
	
	With $m_{2}$ (or $m_{3}$ which is more restrictive than $m_{2}$), $\Delta d$ sometimes reaches the order of magnitude $n$. We showed with $\Delta d \leq 1$ that it was impossible with \textit{1D--move} which allows the local search to be more regular.

\section{New evaluation function targeting Maximin}

	\subsection{Presentation of a Maximin effect: narrowing the distribution of distances} \label{cases}
	
We study the properties of distances obtained with the evaluation function $\phi_{p}$ in SA solutions. We represent all the distances of a configuration in histograms and identify properties that will allow us to establish a better evaluation function below. From now on, we distinguish three cases relative to values taken by $n$ and $k$. We note the mean of $D$ for any configuration as $\overline{D}(k,n) = \frac{k n (n+1)}{6}$ as shown in \cite{vandam09}.
\begin{figure}[H]
\centering{
			\includegraphics[width=0.7\columnwidth]{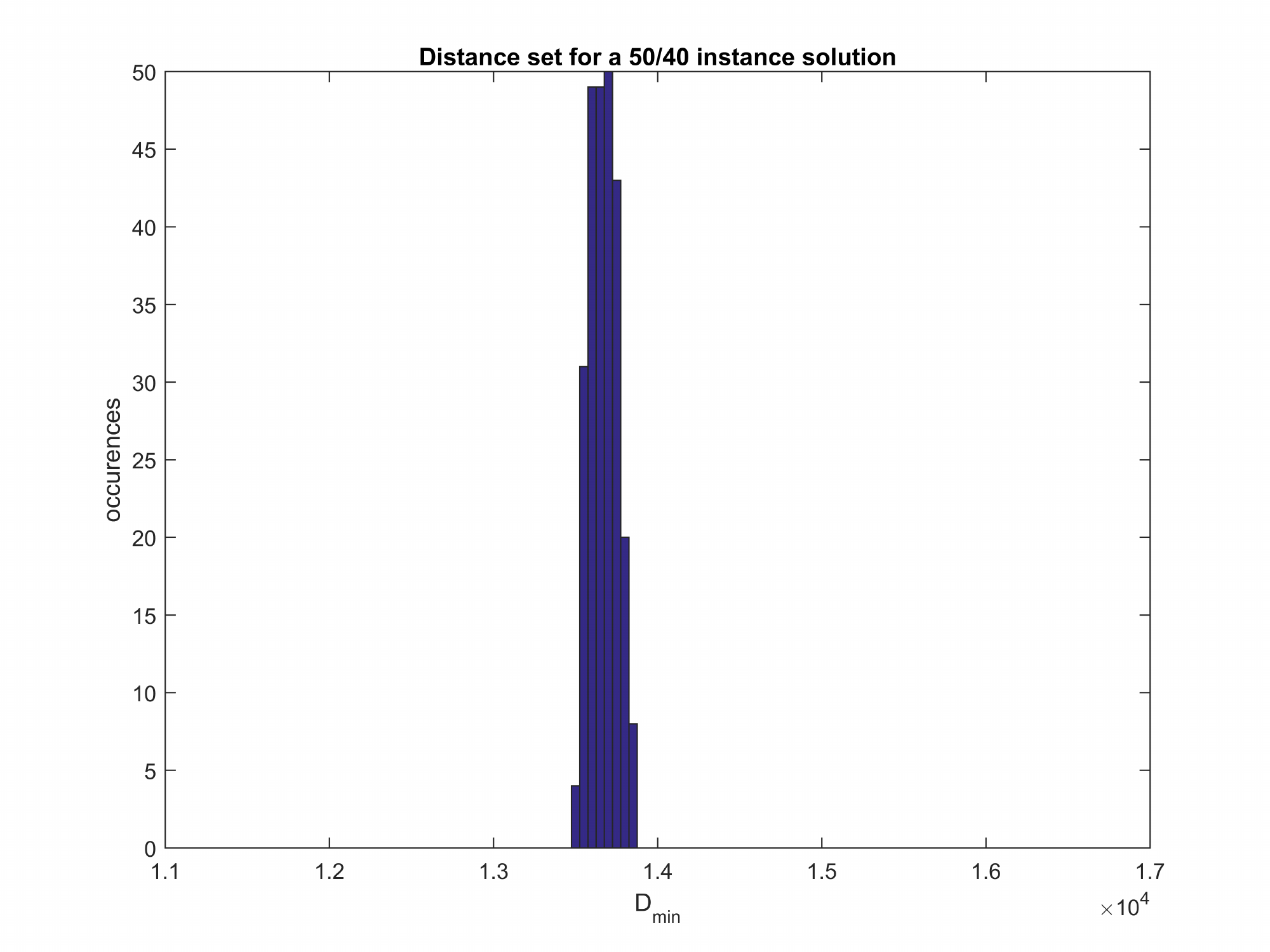}
			\includegraphics[width=0.7\columnwidth]{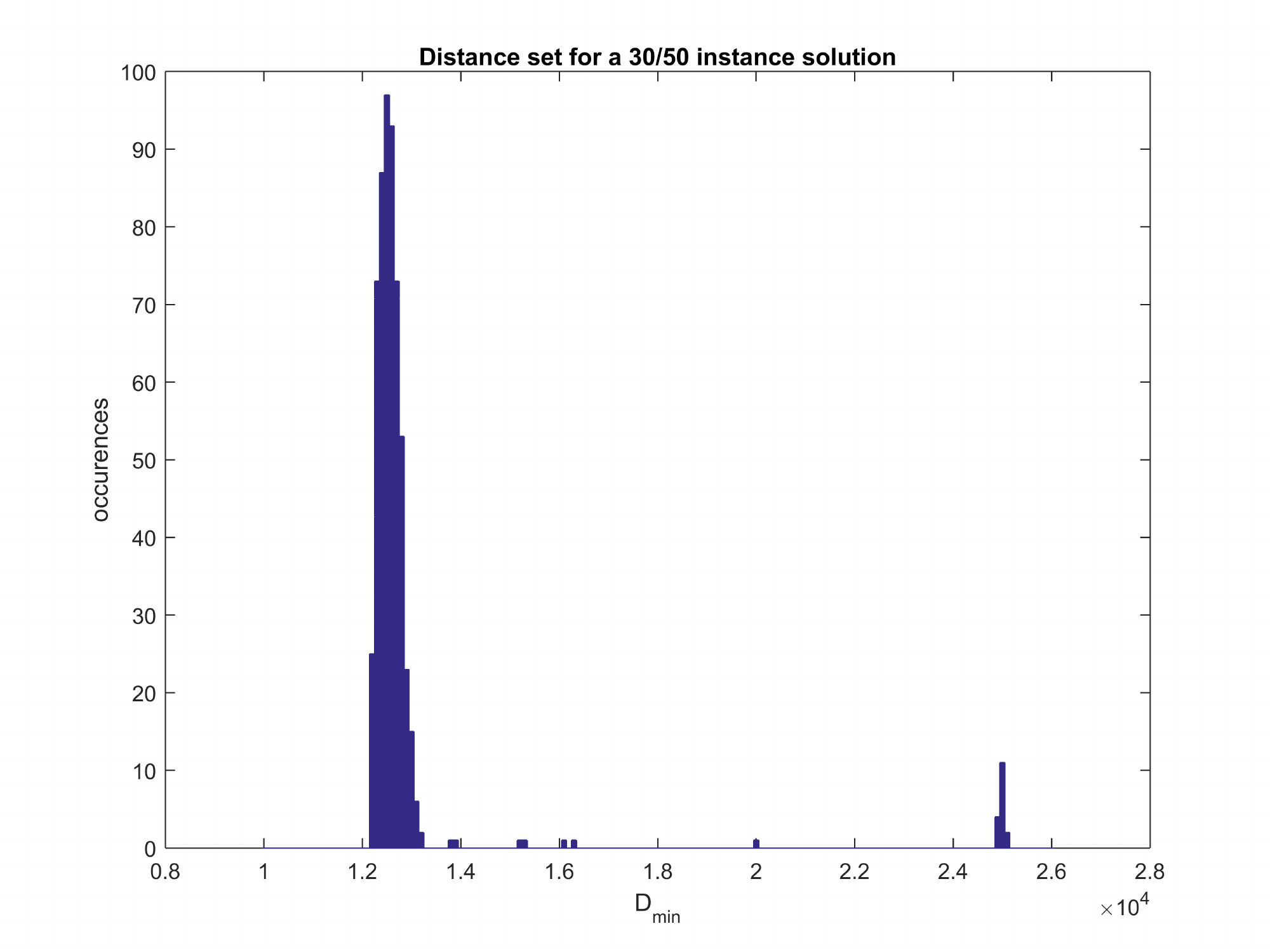}
			\includegraphics[width=0.7\columnwidth]{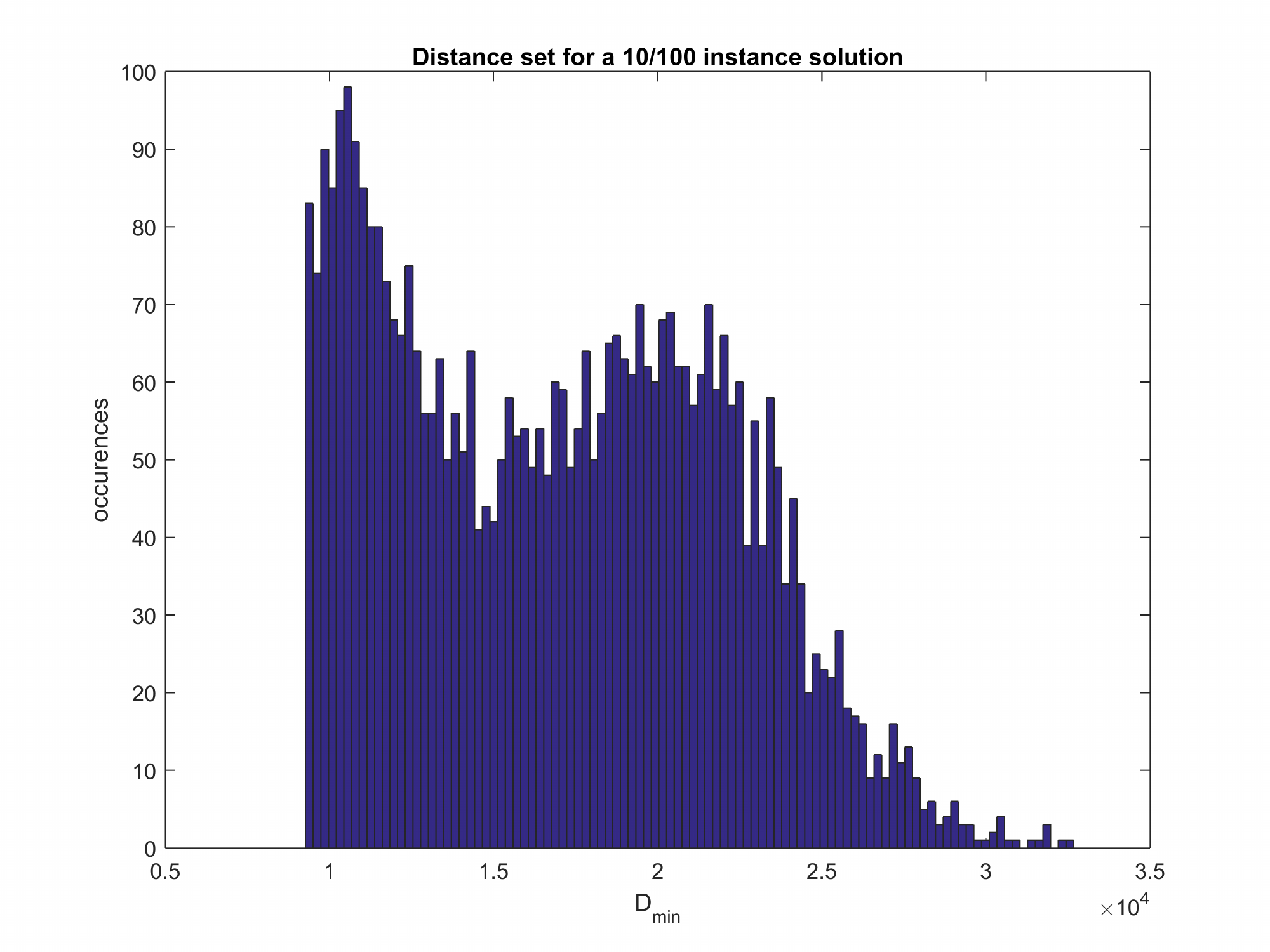}
}
	\caption{Histograms of distances for $50/40$, $30/50$ and $10/100$ solutions}
  \label{the3cases}
	\end{figure}
	
	\subsubsection*{\textbf{Case} $n \leq k$}
In this case (see Figure~\ref{the3cases}, $50/40$), the distances of potential solutions are concentrated around the mean. It is highly probable that two points at random taken will be neighbors. This explains why a point is close to all others in SA solutions and we talk about unimodal distribution. In our example $50/40$ in Figure~\ref{the3cases}, the statistical range of $D$ relative to $\overline{D}$, $\frac{D_{\mbox{\scriptsize{max}}} - D_{\mbox{\scriptsize{min}}}}{\overline{D}} = \frac{340}{13667} = 2.5\%$, in fact, is narrow. The rationale for this behavior is that when the number of points is less than the number of dimensions, it happens, in absence of constraints, that all the points are equidistant. Since the Latin constraint has to be respected, the points cannot be exactly equidistant. The distances, however, do not differ significantly.

	\subsubsection*{\textbf{Case} $k \leq n \leq 2k$}
	In this case (Figure~\ref{the3cases}, $30/50$), distributions are concentrated around two peaks. The first peak is mainly around the average distance (actually, there is a little shift between the peak and the mean because both the peaks preserve $\overline{D}$) and the second peak is located around the doubled average distance. Much more distances are concerned by the first peak. 
	
	We illustrate this phenomenon with the $30/50$ instance in Figure~\ref{the3cases}. We can explain this by the fact that it is possible for this many points to be placed in an hyperoctahedron. In such a geometric object, each point is at the same distance from every other point but one, which is farther away. Thus, the distribution of distances shows two values, with the smaller being represented much more frequently.
	
	In our example, $\overline{D}(30,50) = 12500$. Concerning the highest peak, the statistical range remains small compared with the mean: the ratio is $7.8\%$, larger than in the first case for the whole distribution. There are only seven distances located in the interval $\left[13183;24865\right]$
	
	\subsubsection*{\textbf{Case} $2k \leq n$} 
	In this last case (Figure~\ref{the3cases}, $10/100$), distances are distributed more uniformly. There is neither a dense peak nor a sparse interval. We observe a decrease of occurrences with an increase in the value of the distance.
	
	\subsubsection*{\textbf{Observations and consequences}}
	For the first case, the only peak is naturally thin thanks to SA and particularly $\phi_{p}$ action. There is a little point in trying to narrow it more. We note that for the two last cases ($k \leq n$), narrowing differences between distances lead to improve performance. We illustrate this on the $8/20$ instance. We represent distance sets of several possible solutions and observe that the best solutions have the most narrowed distributions. We compare two solutions in Figure~\ref{distribution_8_20_comp2} with $D_{\min} = 421$ and $D_{min} = 446$ which is the best solution found in \cite{rimmel14}. Indeed, we note that $D_{\min} = 446$ has the most narrow peak. We formulate the hypothesis that this property may be beneficial for SA performance. We introduce below a new evaluation function taking into account this aspect.
	
	\begin{figure*}[ht]
		\centering{\includegraphics[width=1\textwidth]{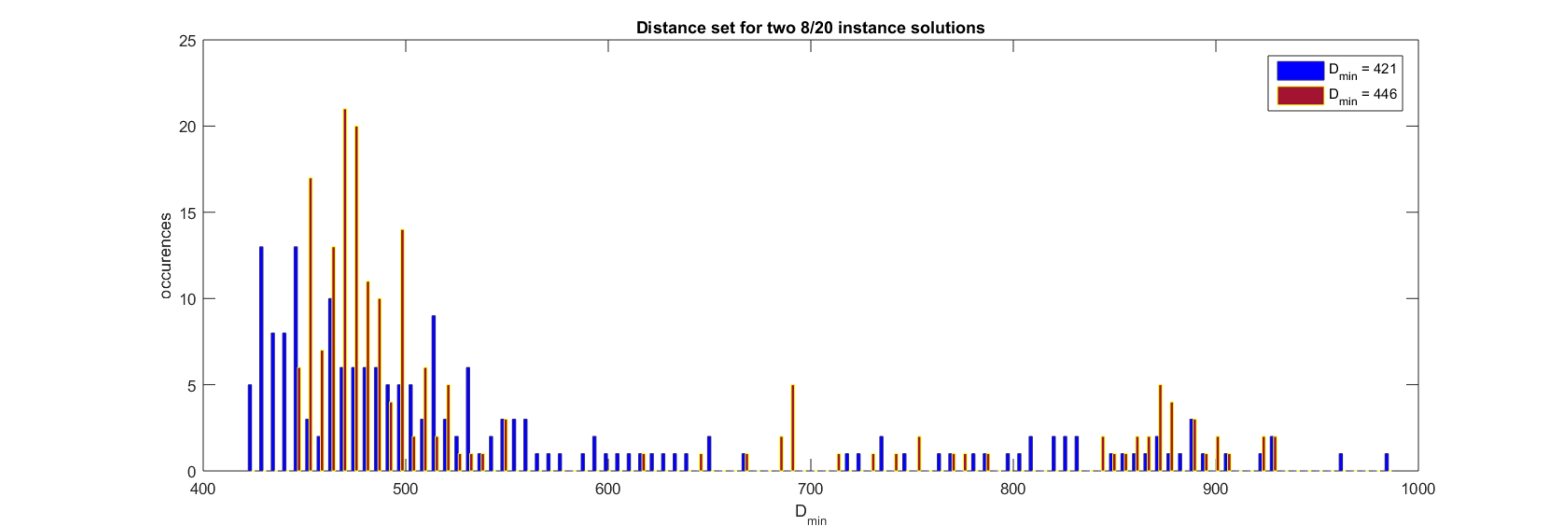}}
		\caption{Distance sets of two $8/20$ solutions}
		\label{distribution_8_20_comp2}
	\end{figure*}
	
	\subsection{Definition of evaluation function $\psi$}
	
	We propose an evaluation function $\psi_{p,\sigma}$ to replace the usual function $\phi_{p}$:
	\begin{equation}
	\psi_{p,\sigma} = \left( \sum_{i=1}^{\binom{n}{2}} w_{i} d_{i}^{-p} \right)^{\frac{1}{p}} \mbox{, where} ~w_{i} = \frac{1}{\sqrt{\sum_{j=1}^{\binom{n}{2}} e^{-\frac{\left|D_{j}-D_{i}\right|^{2}}{\sigma^{2}}}}}.
	\end{equation}
	
	 The idea is to add weights $w_{i} \geq 1$ for each distance term $d_{i}^{-p}$. These weights determine if the distance is close to other ones. If a distance is far from the others, the weight will be high. Consequently, it forces the distances to be close to each other.
	A single drawback of $\psi$ is its complexity in $\mathcal{O}(n^4)$. There are different ways to reduce this complexity. First, for instance, it is possible to consider only the differences which respect $\left|D_{j}-D_{i}\right|^2 \leq 5 \sigma^2$. In this way, we avoid the calculations of terms that may be considered as negligible ($e^{-5} \ll 1$). Instead of summing up $\binom{n}{2}$ distances, we can randomly choose $\mathcal{O}(n)$ distances~$D_{j}$.
	
		\begin{table*}[t]
	\centering
	$\begin{array}{|c|c||c|c|c|c|}
	\hline
	\mbox{Inst.} & \sigma & \phi_{10} ~\mbox{\&}~ m_2 & \psi_{10,\sigma} ~\mbox{\&}~ m_2 & \phi_{10} ~\mbox{\&}~ \mbox{\textit{1D-move}} & \psi_{10,\sigma} ~\mbox{\&}~ \mbox{\textit{1D-move}}\\
	\hline
	4/25 & 70 & 177.59 \pm 0.29 & 177.98 \pm 0.71 &  180.51 \pm 0.27 & 181.24 \pm 0.23\\
	\hline
	9/10 & 20 & 156.24 \pm 0.10 & 156.09 \pm 0.06 & 156.54 \pm 0.06 & 156.49 \pm 0.10 \\
	\hline
	8/20 & 65 & 431.98 \pm 0.61 & 433.58 \pm 0.70 & 436.20 \pm 0.56 & 445.28 \pm 0.45\\
	\hline
	\end{array}$
	\caption{Performance of SA with different setups for evaluation function and mutation}
	\label{perfpsi}
	\end{table*}
	
	\subsection{Tuning of parameter $\sigma$  and its justification}
	
	Let us focus on the parameter $\sigma$: given that we aim at furnishing a large number of scores, we need to tune it in a global way. It must depend directly on $n$ and $k$, without preliminary experiments for each instance $k/n$. Looking at the definition of $\psi_{p,\sigma}$, this variable is introduced in order to regulate the order of magnitude of the exponential term. We see that $\sigma$ should have approximately the same order of magnitude than the values taken by $\left|D_{j}-D_{i}\right|^{2}$. 
	
	This is why we try to give the expression of a linear function of $k$ and $n$ which is similar to typical values $\left|D_{j}-D_{i}\right|^{2}$. To establish it, we study the variance of a random variable: the tuning of $\sigma$ is founded on Theorem~\ref{th}.
	
	\begin{theorem}
	\label{th}
	Let $D(k,n)$ be the random variable representing any square distance in any configuration of instance $k/n$. We have $D\left(k,n\right) \sim \mathcal{N}\left(\frac{k n \left(n+1\right)}{6},g\left(n\right)\right)$ with $g\left(n\right) \sim \frac{7 k n^{4}}{180} + \mathcal{O}(n^3)$.
	\end{theorem}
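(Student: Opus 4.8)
\section*{Proof proposal}

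The plan is to exploit the fact that a square distance decomposes additively across the $k$ dimensions into independent contributions, and then to combine an elementary moment computation with the Central Limit Theorem. Fix two points $p_1$ and $p_2$ of a uniformly random configuration. Because the Latin constraint is realized by drawing an independent uniform permutation of $\{1,\dots,n\}$ in each dimension, the per-dimension gaps $X_l = (p_1(l)-p_2(l))^2$ for $l=1,\dots,k$ are independent and identically distributed, and
\[
D(k,n) = \sum_{l=1}^{k} X_l,
\]
where each $X_l$ has the law of $(A-B)^2$ with $(A,B)$ a uniformly chosen ordered pair of distinct elements of $\{1,\dots,n\}$. This reduces the whole statement to computing the first two moments of the single-dimension variable $X=(A-B)^2$ and invoking a limit theorem for sums of i.i.d. variables.

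First I would compute $E[X]$. Writing $E[X]=\frac{1}{n(n-1)}\sum_{a\neq b}(a-b)^2$ and expanding the double sum with the closed forms for $\sum a$ and $\sum a^2$, one gets $\sum_{a,b}(a-b)^2=\frac{n^2(n-1)(n+1)}{6}$, hence the exact value $E[X]=\frac{n(n+1)}{6}$ and $E[D]=k\,E[X]=\frac{kn(n+1)}{6}$, matching the announced mean $\overline{D}(k,n)$. Next I would compute the variance. Since the $X_l$ are independent, $g(n)=\mathrm{Var}(D)=k\,\mathrm{Var}(X)=k\bigl(E[X^2]-(E[X])^2\bigr)$, where $E[X^2]=E[(A-B)^4]=\frac{1}{n(n-1)}\sum_{a\neq b}(a-b)^4$. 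Expanding $(a-b)^4$ and substituting the power-sum formulas for $\sum a,\dots,\sum a^4$ yields an exact rational expression in $n$; retaining only leading orders gives $E[(A-B)^4]\sim\frac{n^4}{15}$ and $(E[X])^2\sim\frac{n^4}{36}$, so that $\mathrm{Var}(X)\sim\bigl(\frac{1}{15}-\frac{1}{36}\bigr)n^4=\frac{7}{180}n^4$ and therefore $g(n)=k\,\mathrm{Var}(X)=\frac{7kn^4}{180}+\mathcal{O}(n^3)$, as claimed.

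Finally, the normality follows from the same i.i.d. decomposition: the $X_l$ are independent, identically distributed, and have finite variance, so the Lindeberg--Lévy Central Limit Theorem gives that $\bigl(D-kE[X]\bigr)/\sqrt{k\,\mathrm{Var}(X)}$ converges to a standard normal as the number of dimensions $k$ grows, which justifies the approximation $D(k,n)\sim\mathcal{N}\bigl(\frac{kn(n+1)}{6},g(n)\bigr)$. I expect the evaluation of the fourth-moment sum to be the most laborious step, and the genuinely delicate point to be the normality claim itself: the Gaussian law is only an asymptotic approximation valid for large $k$ (the exact law of $X$ is discrete and skewed), so the statement should be read as the limiting/approximate distribution of the sum rather than an exact identity holding for every $k$ and $n$. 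I would also note that the independence across dimensions is the structural fact that makes the CLT applicable at all, and that it is specific to the permutation construction enforcing the Latin constraint.
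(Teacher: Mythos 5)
Your proposal is correct and follows essentially the same route as the paper's proof: decompose the squared distance as a sum of independent per-dimension contributions, compute $\mathbb{E}(Y^2)\sim n^4/15$ to get $\mathrm{Var}(Y)\sim 7n^4/180$, and invoke the Central Limit Theorem in $k$. The only differences are cosmetic — you derive the mean $\frac{kn(n+1)}{6}$ directly where the paper cites prior work, and you state more explicitly than the paper that the Gaussian law is an asymptotic approximation valid for large $k$.
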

	
	\begin{proof}
	Thanks to \cite{vandam09}, we know that $\mathbb{E}(D\left(k,n\right)) = \frac{k n (n+1)}{6}$. We note $\left(P_1,P_2\right)$ the random variable that gives any couple of points for $n/k$. The random variable $D\left(k,n\right)$ is a function of $\left(P_1,P_2\right)$. For any $1 \leq j \leq k$, we note $Y(j) = \left(P_1(j)-P_2(j)\right)^{2}$ and get $D\left(k,n\right) = \sum_{j=1}^{k} Y(j)$. As $Y(i)$ and $Y(j)$ are independent if $i\neq j$, we note $Y(i) = Y$ to keep the notation simple. If $k$ is high enough, we apply the Central Limit Theorem: $D\left(k,n\right) \sim \mathcal{N}\left(\frac{k n \left(n+1\right)}{6},k \mathbb{V}\mbox{ar}(Y) \right)$.
	We focus first on $\mathbb{E}(Y^{2}) = \mathbb{E}((P_1(j)-P_2(j))^{4})$:
\[ 
 	\mathbb{E}(Y^{2}) = \frac{\sum_{x=1}^{n} \sum_{y\neq x} (x-y)^{4}}{\frac{n (n-1)}{2}} = \frac{2\left( n\sum_{z=1}^{n-1} z^{4} - \sum_{z=1}^{n-1} z^{5} \right)}{n (n-1)} = \frac{n^{4}}{15} + \mathcal{O}\left(n^{3}\right). 
 	\]

	We thus deduce $\mathbb{V}\mbox{ar}(Y) = \mathbb{E}(Y^{2}) - \mathbb{E}(Y)^{2} = \frac{n^{4}}{15} - \frac{n^{4}}{36} + \mathcal{O}\left(n^{
	3}\right) \sim \frac{7 n^{4}}{180}$.
	\end{proof}

	\begin{table*}[t]
		\centering
		\small{
		$\begin{array}{|c|c|c|c|c|c|c|c|c|c|c|}
		\hline
		\multicolumn{2}{|c||}{\hspace{1mm} $\backslashbox{$n$}{$k$}$ \hspace{1mm}} & \hspace{3mm} 3 \hspace{3mm} & \hspace{3mm} 4  \hspace{3mm} & \hspace{3mm} 5 \hspace{3mm} & \hspace{3mm} 6 \hspace{3mm} & \hspace{3mm} 7 \hspace{3mm} & \hspace{3mm} 8 \hspace{3mm} & \hspace{3mm} 9  \hspace{3mm} & \hspace{3mm} 10 \hspace{3mm} \\
		\hline
		\hline
		\multicolumn{2}{|c||}{3} & 6 & 7 & 8 & 12 & 13 & 14 & 18 & 19 \\
		\hline
		\multicolumn{2}{|c||}{4} & 6 & 12 & 14 & 20 & 21 & 26 & 28 & 33 \\
		\hline
		\multicolumn{2}{|c||}{5} & 11 & 15 & 24 & 27 & 32 & 40 & 43 & 50 \\
		\hline
		\multicolumn{2}{|c||}{6} & 14 & 22 & 32 & 40 & 47 & 54 & 62 & 68 \\
		\hline
		\multicolumn{2}{|c||}{7} & 17 & 28 & 40 & 52 & 62 & 72 & 81 & 91 \\
		\hline
		\multicolumn{2}{|c||}{8} & 21 & 42 & 50 & 66 & 80 & 91 & 103 & 116 \\
		\hline 
		\multicolumn{2}{|c||}{9} & 22 & 42 & 61 & 82 & 95 & 114 & 128 & 144 \\
		\hline
		\multicolumn{2}{|c||}{10} & 27 & 50 & 82 & 95 & 113 & \textbf{134} & 158 & 175 \\
		\hline
		\multicolumn{2}{|c||}{11} & 30 & 55 & 82 & 111 & \textbf{133} & 157 & 184 & 211 \\
		\hline
		\multicolumn{2}{|c||}{12} & 36 & 63 & 94 & 142 & 158 & \textbf{184} & 213 & 243 \\
		\hline
		\multicolumn{2}{|c||}{13} & 41 & 70 & \textbf{107} & 143 & 184 & \textbf{214} & 246 & 279 \\
		\hline
		\multicolumn{2}{|c||}{14} & 42 & 78 & \textbf{109} & \textbf{162} & 220 & \textbf{247} & 282 & 318 \\
		\hline
		\multicolumn{2}{|c||}{15} & 48 & 89 & \textbf{135} & \textbf{179} & 228 & \textbf{281} & \textbf{323} & 363 \\
		\hline
		\multicolumn{2}{|c||}{16} & 50 & 94 & 154 & \textbf{200} & \textbf{254} & \textbf{328} & 364 & \textbf{412} \\
		\hline
		\multicolumn{2}{|c||}{17} & 56 & 102 & 163 & 221 & \textit{277} & 343 & 413 & 462 \\
		\hline
		\multicolumn{2}{|c||}{18} & 57 & 114 & \textbf{176} & \textbf{249} & \textbf{306} & \textbf{376} & 469 & 515 \\
		\hline
		\multicolumn{2}{|c||}{19} & 62 & \textbf{123} & \textbf{193} & \textbf{268} & \textbf{336} & 408 & 491 & 576 \\
		\hline
		\multicolumn{2}{|c||}{20} & 66 & \textbf{138} & \textbf{210} & \textbf{293} & \textbf{372} & \textbf{448} & 528 & 645 \\
		\hline
		\multicolumn{2}{|c||}{21} & 69 & 149 & 232 & \textbf{315} & \textbf{401} & 482 & 570 & \textbf{674} \\
		\hline
		\multicolumn{2}{|c||}{22} & 82 & \textbf{154} & \textbf{246} & 347 & 433 & 525 & 623 & \textbf{721} \\
		\hline
		\multicolumn{2}{|c||}{23} & 82 & \textbf{165} & 260 & 364 & \textbf{468} & 566 & 667 & \textit{773} \\
		\hline
		\multicolumn{2}{|c||}{24} & 83 & \textbf{173} & \textbf{276} & \textbf{391} & 506 & 609 & \textbf{720} & 837 \\
		\hline
		\multicolumn{2}{|c||}{25} & 89 & 183 & 294 & 419 & \textbf{541} & 657 & \textit{768} & \textbf{897} \\
		\hline
		\end{array}$
		}
		\caption{Highscores obtained with ``all purpose'' tuning}
		\label{highscores}
	\end{table*}

	We propose a global tuning of $\sigma^{2}$ as a linear function of the variance of our configurations. As computing the variance of a configuration, at every iteration, would be expensive, we formulate the hypothesis that the variance of the square distances set of the SA solutions follows the function $g\left(n\right)$ above. The idea of the tuning is to consider $\sigma^{2}$ linearly dependent on the variance of the random variable $D\left(k,n\right)$. In the weights $w_{i}$, we compare the difference between the current distance and an extra one with $\sigma$ by calculating $\frac{\left|D_{j}-D_{i}\right|^{2}}{\sigma^{2}}$ in order to identify which differences $D_{j}-D_{i}$ have to be taken into account. 
	
	In the case $n \geq 2k$, we assume $\sigma^{2} = c k n^{4}$. According to several experiments series, we identify a good compromise with $c = \frac{1}{300}$. 
	
	In the case $n \leq k$, leading to unimodal distributions, $\psi$ does not bring more interesting results than $\phi$. It is equivalent to assuming $c$ to be very large ($c \rightarrow \infty$). 
	
	Finally, the case $k \leq n \leq 2 k$ which is an intermediary of the two previous cases, can be tuned with $\sigma = 2 c k n^4$. This proposition does not obviously represent the best tuning for all possible instances but gives an efficient and simple solution for the tuning of $\sigma$. 
	
	It is necessary to mention that the case $k \leq n \leq 2 k$ is the case where tuning is essential: to be as efficient as possible, the value of $\sigma$ has to be carefully selected. Table~\ref{perfpsi} shows the impact of $\psi_{p,\sigma}$ on SA performance with mutations $m_2$ and \textit{1D-move}. We keep the same experimental setup as in Subsection~\ref{PerformanceEvaluation}: SA makes a thermal linear descent, the results presented come out from 100 runs and the average is within the 95\% confidence interval.
	
	In Table~\ref{highscores}, we update scores for the same instances as in \cite{rimmel14}. The results were produced with $10^{7}$ iterations and $p=5$. Our function $\psi_{p,\sigma}$ is used when $k \leq n$, $\phi_p$ elsewhere. We note in bold type improved results and in italics results worse than \cite{rimmel14}. For $4 \leq k \leq 8$, the use of \textit{1D--move} and $\psi_{p,\sigma}$ allows us to exceed a large number of scores but this improvement is less significant for other values. For $k = 3$, we suppose that the new tools are not able to outperform previous results because the results are already optimal or very good. For $k = \left\{9,10\right\}$, a credible hypothesis is that the value of $\frac{n}{k}$ is so close to 1 that the effect of $\psi_{p,\sigma}$ is weak. Generally, results could be better with a specifically adapted tuning. Here, we established temperature, $p$ and $\sigma$ by making compromises between all the instances. However, in a real life case, by treating complex systems, we work on a defined instance with $k$ and $n$ fixed. In such circumstances, we naturally advice to customize the tuning of the different parameters by making preliminary experiments on this very instance. We expect that such an approach would produce results outperforming those in Table~\ref{highscores}.
	
	\section{Conclusion}
	
In this article, we introduce new techniques to treat the Maximin LHD construction. The first one is the \textit{1D--move} mutation especially dedicated to the LHD structure. It is very efficient for a local search on LHDs because it makes it possible to follow a step-by-step path on the cost surface without jumping over possible minima. The second tool, the evaluation function $\psi_{p,\sigma}$ directly focuses upon Maximin optimization. 

As numerous problems, among them Maximin Designs, involve this criterion, we emphasize that this function can be used for many other applications. In the Maximin LHD context, the function $\psi_{p,\sigma}$ tries to find solutions by narrowing a set of possible distances. SA, with \textit{1D--move} and $\psi_{p,\sigma}$, gave results better than those considered to be ``the best known'' for the majority of cases without any dedicated tuning.

\bibliographystyle{abbrv}
\bibliography{ArticleLH-CoRR}

\end{document}